\definecolor{Red}{rgb}{1,0,0}
\definecolor{Blue}{rgb}{0,0,1}
\definecolor{Olive}{rgb}{0.41,0.55,0.13}
\definecolor{Yarok}{rgb}{0,0.5,0}
\definecolor{Green}{rgb}{0,1,0}
\definecolor{MGreen}{rgb}{0,0.8,0}
\definecolor{DGreen}{rgb}{0,0.55,0}
\definecolor{Yellow}{rgb}{1,1,0}
\definecolor{Cyan}{rgb}{0,1,1}
\definecolor{Magenta}{rgb}{1,0,1}
\definecolor{Orange}{rgb}{1,.5,0}
\definecolor{Violet}{rgb}{.5,0,.5}
\definecolor{Purple}{rgb}{.75,0,.25}
\definecolor{Brown}{rgb}{.75,.5,.25}
\definecolor{Grey}{rgb}{.5,.5,.5}
\newcommand{\R}{\mathbb{R}}
\newcommand{\ignore}[1]{\relax}
\newtheorem{theorem}{Theorem}[section]
\newtheorem{lemma}[theorem]{Lemma}
\definecolor{Red}{rgb}{1,0,0}
\definecolor{Blue}{rgb}{0,0,1}
\definecolor{Olive}{rgb}{0.41,0.55,0.13}
\definecolor{Green}{rgb}{0,1,0}
\definecolor{MGreen}{rgb}{0,0.8,0}
\definecolor{DGreen}{rgb}{0,0.55,0}
\definecolor{Yellow}{rgb}{1,1,0}
\definecolor{Cyan}{rgb}{0,1,1}
\definecolor{Magenta}{rgb}{1,0,1}
\definecolor{Orange}{rgb}{1,.5,0}
\definecolor{Violet}{rgb}{.5,0,.5}
\definecolor{Purple}{rgb}{.75,0,.25}
\definecolor{Brown}{rgb}{.75,.5,.25}
\definecolor{Grey}{rgb}{.5,.5,.5}
\definecolor{Pink}{rgb}{1,0,1}
\definecolor{DBrown}{rgb}{.5,.34,.16}
\definecolor{Black}{rgb}{0,0,0}
\author{
{\sf David Gamarnik}\thanks{MIT; e-mail: {\tt gamarnik@mit.edu}.Research supported  by the NSF grants CMMI-1335155.}
\and
{\sf Sidhant Misra}\thanks{Los Alamos Research Laboratory; e-mail: {\tt sidhant@mit.edu}}
}
\begin{document}

\title{A Note on Alternating Minimization Algorithm for the Matrix Completion Problem}
\date{}

\maketitle

\begin{abstract}
We consider the problem of reconstructing a low rank matrix from a subset of its entries and analyze two
variants of the so-called Alternating Minimization algorithm, which has been proposed in the past. We establish that
when the underlying matrix has rank $r=1$, has positive bounded entries, and the graph $\mathcal{G}$  underlying the revealed entries has
bounded degree and diameter which is at most logarithmic
in the size of the matrix, both algorithms succeed in reconstructing the matrix approximately in polynomial time starting
from an arbitrary initialization. We further provide simulation results which
suggest that the second algorithm which is based on the message passing type updates, performs significantly better.
\end{abstract}

\section{Introduction}
Matrix completion refers to the problem of recovering a low rank matrix from an incomplete subset of its entries. This problem arises in a vast number of applications that involve \emph{collaborative filtering}, where one attempts to predict the unknown preferences of a certain user based on collective known preferences of a large number of users. It attracted a lot of attention in recent times due to its application in recommendation systems
and the well-known Netflix Prize.

\subsection{Formulation}
Let $M  = \alpha \beta^T$ be a rank $r$ matrix, where $\alpha \in \mathbb{R}^{m\times r}, \beta \in \mathbb{R}^{n \times r}$.
Let $\mathcal{E} \subset [m]\times[n]$ be a subset of the indices
and let $M_{\mathcal{E}}$ denote the entries of $M$ corresponding to the subset $\mathcal{E}$.
The matrix completion problem is the problem of reconstructing $M$ using efficient (polynomial time) algorithms given $M_{\mathcal{E}}$.

Without any further assumptions, the matrix completion problem is NP-hard \cite{Meka08}.
However under certain conditions, the problem has been shown to be tractable. The most common assumption typically considered in the literature is that the matrix
$M$ is ``incoherent" and the subset $\mathcal{E}$ is chosen using some random mechanism, for example uniformly at random.
The incoherence condition was introduced Candes and Recht~\cite{candes2009exact} and Candes and Tao~\cite{CandesTao09}, where it was shown that convex relaxation
resulting in nuclear norm minimization succeeds in reconstructing the matrix,
assuming a certain lower bound on the size of $\mathcal{E}$.
Keshevan et al.~\cite{keshavan2009matrix} and \cite{keshavan2009matrix2}, use an algorithm consisting of a truncated singular value projection followed by a local minimization subroutine
on the Grassmann manifold and show that it succeeds when $|\mathcal{E}|  = \Omega(n r \log n)$. Jain et al.~\cite{jain2013low} show that the local minimization
in~\cite{keshavan2009matrix} can be successfully replaced by Alternating Minimization algorithm. The use of Belief Propagation for matrix factorization has also been studied by physicists in \cite{Lenka14} heuristically.
This is just a small subset of a vast literature on matrix completion problem which is the most relevant to the present work.

\subsection{Algorithms and the results}
For the rest of the paper we will assume for simplicity that $m =n$, though our results easily extend to the more general case $m = \Theta(n)$.
Let $\mathcal{V}_R$ and $\mathcal{V}_C$ denote the sets of rows and columns of $M$ respectively, both indexed for simplicity
by sets $\{1,2,\ldots,n\}=[n]$, and
let $\mathcal{G}$ be a bipartite undirected graph on the vertex set $\mathcal{V} = \mathcal{V}_R \cup \mathcal{V}_C$  with edge set $\mathcal{E}$, where we recall
that $\mathcal{E}$ is the set of revealed entries of $M$. Specifically, the edge $(i,j)\in\mathcal{E}$ if and only if the entry $M_{i,j}$ is revealed.
Denote by $\Delta = \Delta(n)$ the maximum degree of $\mathcal{G}$ (the maximum number of neighbors among all nodes of $\mathcal{G}$).
The graph $\mathcal{G}$ represents
the structure of the revealed entries of $M$. We denote the $i$th row of $\alpha$ by $\alpha_i$ and the $j$th row of $\beta$ by $\beta_j$,
both thought of as column vectors.
Then $M_{ij} = \alpha_i^T \beta_j$.
The matrix completion is then the problem of finding $X,Y\in \R^{n\times r}$, such that for every $(i,j)\in\mathcal{E}$,
$X_i^TY_j=M_{i,j}=\alpha_i^T \beta_j$, where $(X_i, 1\le i\le n)$ and $(Y_j, 1\le j\le n)$ are rows of $X$ and $Y$ respectively. Alternatively, one can
consider the optimization problem
\begin{align} \label{opt}
	\min_{X, Y \in \R^{n \times r}} \sum_{(i,j) \in \mathcal{E}} |X_i^T Y_j - M_{ij} |^2,
\end{align}
and seek solutions with zero objective value.
Observe that, computational tractability aside one can only obtain $X=\alpha$ and $Y=\beta$ up to orthogonal transformation, since of any orthogonal $r\times r$
matrix $\Gamma$, $\tilde\alpha\triangleq \alpha\Gamma, \tilde\beta\triangleq \beta\Gamma$ solve the same matrix completion problem.

In this paper  we consider two versions of the Alternating Minimization algorithm which we call Vertex Least Square (VLS) and Edge Least Square (ELS) algorithms,
where VLS is identical to the Alternating Minimization algorithm analyzed in~\cite{jain2013low}, and ELS is a message passing type version of the VLS,
which has some similarity with the widely studied Belief Propagation algorithm.
Unlike~\cite{jain2013low}, where VLS was used as a local optimization subroutine following a singular value projection, i.e., with a warm start, we consider
the issue of global convergence of VLS and ELS. For the special case of rank $r=1$, bounded positive entries of $M$, bounded degree of $\mathcal{G}$,
and when the diameter of $\mathcal{G}$ is
$O(\log n)$, we establish that both algorithms converge to the correct factorization of $M$ geometrically fast. In particular,
the algorithms produce rank $r=1$ $\epsilon$-approximation of $M$ in time $O(n^{\gamma}\log n)$, where $\gamma$ only depends on the parameters of the model.
Our proof approach is based on establishing a certain contraction for the steps of VLS and ELS similar to the ones used for bounding mixing rates of Markov chain.

Even though our theoretical results show similar performance for VLS and ELS algorithms, experimentally we show that the ELS performs better and often significantly better.
Specifically, in Section~\ref{sec:sim} we show that for certain classes of randomly constructed matrices ELS converges much faster than VLS, and for other
classes of random graphs, ELS is converging while VLS is not. At this stage the theoretical understanding of such an apparent difference in the performance
of two algorithms is lacking and constitutes an interesting open challenge, especially in light of the fact that VLS was a major component in the award winning algorithms for the Netflix challenge \cite{Korin09}, \cite{Korensolo09}.

\section{VLS and ELS Algorithms} \label{sec:LS}
We now introduce and analyze two iterative decentralized algorithms based on the Alternating Minimization principle, that attempt to solve the non-convex least squares problem in (\ref{opt}). The first is what we call the Vertex Least Squares (VLS) algorithm. For the bi-partite graph $\mathcal{G}$ we write $i\sim j$ if $i\in\mathcal{V}_R$ is connected to $j\in\mathcal{G}_C$.

{\rule{\linewidth}{1pt}}
\vspace{-0.09in}

VERTEX LEAST SQUARES (VLS)
\vspace{-0.07in}

{\rule{\linewidth}{1pt}}
\begin{enumerate}
	\item Initialization: $x_{i,0}, y_{j,0}$. Matrix $M$ and graph $\mathcal{G}$.
	\item For $t = 1$ to $T$:
		For each $i \in \mathcal{V}_R$, set
		\begin{align}			x_{i, t+1} = \arg \min_{x \in \mathbb{R}^r} \sum_{j: \mathcal{V}_C\ni j \sim i} \left(x^Ty_{j,t} -  M_{ij}\right)^2.  \label{VLSupdate1} \end{align}
		For each $j \in \mathcal{V}_C$, set
		\begin{align}		y_{j,t+1} = \arg \min_{y \in \mathbb{R}^r} \sum_{i: \mathcal{V}_R\ni i \sim j} \left(y^Tx_{i,t+1} - M_{ij}\right)^2.	\label{VLSupdate2} \end{align}	
	\item Set $X_T=(x_{i,T}, 1\le i\le n), Y_T=(y_{j,T}, 1\le j\le n)$.   Output $\hat{M} = X_T Y_T^T$.			
\end{enumerate}

\rule{\linewidth}{1pt}

Each iteration of the VLS consists of solving $2  n$ least squares problems, so the total computation time per iteration is $O(r^2 \Delta n )$.

The VLS algorithm in the above form is identical to Alternating Minimization \cite{jain2013low} and exploits the biconvex structure of the objective in (\ref{opt}). We prefer to write the iterations of this algorithm in
the above form
to highlight the local decentralized nature of the updates at each vertex. In \cite{jain2013low}, this algorithm was used as a local minimization subroutine with a warm start provided by an SVD projection step prior to it. As we are about to establish, VLS solves the matrix completion problem for the case $r=1$ under non-negativity assumption, without any warm starts.
Furthermore, we will present simulation results showing that in many cases VLS solves the completion problem for general $r$.

Our main result concerning the $r=1$ is as follows. For every matrix $A=(a_{i,j}, 1\le i,j\le n)$, we denote by $\|A\|_F$
is its Frobenius norm: $\|A\|_F=\left(\sum_{i,j}a_{ij}^2\right)^{1\over 2}$.

\begin{theorem} \label{theorem:VLS}
	Let $M = \alpha \beta^T$ with $\alpha, \beta \in \mathbb{R}^n$ and suppose there exists $0 < b < 1$ such that for all $i,j \in [n]$, we have
	$b \leq \alpha_i, \beta_j \leq 1/b$. Suppose that the graph $\mathcal{G}$ is connected and has diameter $d \le c \log n$ for some fixed constant $c$,
and maximum degree $\Delta$.
	 There exists a constant $\alpha > 0$ which depends on $c,\Delta$ and $b$ only,
 such that for any initialization  $b \leq x_{i,0},y_{i,0} \leq 1/b, i \in [n]$ and
$\epsilon >0$, there exists an iteration number
	$T = O(n^\alpha \log n)$ such that after $T$ iterates of VLS, we have $\frac{1}{n} \|X_T Y_T^T - M \|_F < \epsilon$.
\end{theorem}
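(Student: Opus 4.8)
The plan is to use the rank-one structure to turn the two least-squares updates into explicit scalar maps, and then to show that these maps contract a suitable ``spread'' at a rate governed by the diameter of $\mathcal{G}$, in the spirit of Dobrushin's contraction coefficient for Markov chains. For $r=1$ the minimizers in (\ref{VLSupdate1})--(\ref{VLSupdate2}) have closed form, $x_{i,t+1}=(\sum_{j\sim i}M_{ij}y_{j,t})/(\sum_{j\sim i}y_{j,t}^2)$ and symmetrically for $y$. Since $M_{ij}=\alpha_i\beta_j$, the natural objects to track are the multiplicative errors $u_{i,t}=x_{i,t}/\alpha_i$ and $v_{j,t}=y_{j,t}/\beta_j$; substituting yields
\[
u_{i,t+1}=\frac{\sum_{j\sim i}\beta_j^2 v_{j,t}}{\sum_{j\sim i}\beta_j^2 v_{j,t}^2},\qquad
v_{j,t+1}=\frac{\sum_{i\sim j}\alpha_i^2 u_{i,t+1}}{\sum_{i\sim j}\alpha_i^2 u_{i,t+1}^2}.
\]
For each $i$ the weights $w_{ij}=\beta_j^2/\sum_{k\sim i}\beta_k^2$ form a probability vector whose nonzero entries are bounded below by a constant $w_{\min}=w_{\min}(b,\Delta)>0$, using $b\le\alpha,\beta\le 1/b$ and $\deg\le\Delta$. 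The true factorization corresponds exactly to the one-parameter family $u_i\equiv c,\ v_j\equiv 1/c$, reflecting the scaling degeneracy noted after (\ref{opt}).

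Next I would establish an invariant box and a monotone spread. A direct computation with these weights gives $\min_{j\sim i}(1/v_{j,t})\le u_{i,t+1}\le\max_{j\sim i}(1/v_{j,t})$, and symmetrically $\min_{i\sim j}u_{i,t+1}\le 1/v_{j,t+1}\le\max_{i\sim j}u_{i,t+1}$. Hence, letting $[m_t,M_t]$ be the smallest interval containing every $u_{i,t}$ and every $1/v_{j,t}$, the intervals are nested, $[m_{t+1},M_{t+1}]\subseteq[m_t,M_t]$; combined with the initialization in $[b,1/b]$ this keeps all iterates inside $[b^2,b^{-2}]$.

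The heart of the argument is a quantitative contraction propagated through the diameter. Writing $q_t=\min_j v_{j,t}$, the gap from the endpoint is
\[
\tfrac{1}{q_t}-u_{i,t+1}=\frac{\sum_{j\sim i}w_{ij}\,v_{j,t}(v_{j,t}-q_t)/q_t}{\sum_{j\sim i}w_{ij}\,v_{j,t}^2},
\]
so if $i$ has even one neighbor $j$ with $1/v_{j,t}\le M_t-\tfrac12(M_t-m_t)$, then $u_{i,t+1}\le M_t-\kappa(M_t-m_t)$ for a constant $\kappa=\kappa(b,\Delta)>0$; a symmetric bound pulls the minimum up. Linearizing a full round confirms the mechanism: the composed update acts on the error vector as a product of two row-stochastic averaging matrices on $\mathcal{G}$ (the sign flips cancel), i.e. like two steps of a weighted walk, whose $\ell_\infty$-spread contracts exactly as under a Dobrushin coefficient. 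Since two vertices at graph distance $\ell$ only ``communicate'' after $\ell$ rounds, the contraction coefficient over $d$ rounds is $1-\Omega(w_{\min}^{\,d})$; with $d\le c\log n$ this is $1-n^{-\gamma}$ for some $\gamma=\gamma(c,b,\Delta)$, giving $M_{t+d}-m_{t+d}\le(1-n^{-\gamma})(M_t-m_t)$.

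Finally I would convert the spread into Frobenius error. Iterating the per-batch contraction drives the spread below any $\delta$ after $O(n^{\gamma}\log(1/\delta))$ batches, i.e. $O(n^{\gamma}\log n\cdot\log(1/\delta))$ VLS iterations. Once $M_T-m_T\le\delta$, every $u_{i,T}$ and every $1/v_{j,T}$ lie within $\delta$ of a common value in $[b^2,b^{-2}]$, so $|x_{i,T}y_{j,T}-M_{ij}|=|\alpha_i\beta_j|\,|u_{i,T}v_{j,T}-1|\le\delta/b^4$ for all $i,j$; summing over the $n^2$ entries gives $\tfrac1n\|X_TY_T^T-M\|_F\le\delta/b^4$, and choosing $\delta=\epsilon b^4$ yields the theorem with $\alpha=\gamma$. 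I expect the main difficulty to lie in making the propagation of the previous paragraph rigorous: the global maximum $M_t$ is sustained by ``interior'' high vertices and only erodes after its entire $\le d$-neighborhood has been pulled in, so one must track this erosion carefully to extract the clean factor $1-n^{-\gamma}$ (equivalently, to control the Dobrushin coefficient of the distance-$d$ averaging operator) rather than a bound that degrades uncontrollably with $n$.
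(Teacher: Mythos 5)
Your proposal is correct and follows essentially the same route as the paper: the same closed-form rank-one updates, the same multiplicative errors $u_{i,t}=x_{i,t}/\alpha_i$, $v_{j,t}=y_{j,t}/\beta_j$ turning each round into a row-stochastic averaging of the $u$'s over distance-two neighborhoods, and the same contraction of the max-minus-min spread over $d$ consecutive rounds with coefficient $1-\Omega(w_{\min}^{\,d})=1-n^{-\Theta(1)}$, followed by the identical conversion of a small spread into an entrywise (hence Frobenius) bound. The ``difficulty'' you flag at the end is resolved exactly as you suggest and as the paper does: the product of the $d$ time-varying stochastic matrices has all entries bounded below by $z^{d}=n^{-\alpha}$ because the support graph has diameter $d$, which immediately yields the clean Dobrushin factor $1-2n^{-\alpha}$ per batch.
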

Before proceeding to the proof of Theorem \ref{theorem:VLS}, we remark here that in \cite{jain2013low}, the success of VLS was established by showing that the VLS updates resemble a power method with bounded error.
In our proof we also show that VLS updates are like  time varying power method updates, but without any error term. In \cite{jain2013low}, the warm start VLS required that principal angle distance between the left
and right singular vector subspaces of the actual matrix $M$ and the initial iterates are at most $0.5$. With the conditions given in Theorem \ref{theorem:VLS}, this may not always be the case.
From \cite{jain2013low}, the subspace distance between two vectors (rank $1$ case) is given by
\begin{align}
	d(u,v) = 1 - \left(\frac{u^T v}{\|u\| \|v\|}\right)^2.
\end{align}
Suppose that
\begin{align}
	\alpha_i = \begin{cases}
		b, \ &1 \leq i \leq n/2, \\
		1/b, \ &n/2 + 1 \le i \leq n. \\
	\end{cases}
	x_{i,0} = \begin{cases}
		1/b, \ &1 \leq i \leq n/2, \\
		b, \ &n/2 + 1 \le i \leq n.
	\end{cases}
\end{align}
Then $d(x, \alpha ) = 1 - \frac{4b^4}{(1  +b^4)^2}$ is greater than $1/2$ when $b$ is a small constant. In fact the subspace distance can be very close to one.
 Nevertheless, according to Theorem \ref{theorem:VLS} VLS converges
to the correct solution.

\begin{proof}[Proof of Theorem \ref{theorem:VLS}]
Fix $\epsilon>0$ and find $\delta>0$ small enough so that
\begin{align}\label{eq:epsilon-delta}
{2\delta \over b-\delta}<b^2\epsilon.
\end{align}

	From the update rules for VLS in Eq.~(\ref{VLSupdate1})-(\ref{VLSupdate2}), we can write
	\begin{align} \label{eq:VLSclosed1}
		x_{i, t+1} = \frac{\sum_{j: j \sim i}   M_{ij} y_{j,t}}{\sum_{j: j \sim i} y_{j,t}^2} \quad \mbox{and} \quad y_{j,t+1} = \frac{\sum_{i: i \sim j}
M_{ij} x_{i,t+1} }{\sum_{i: i \sim j} x_{i,t+1}^2}.
	\end{align}
	Let $A$ be the  adjacency matrix of $\mathcal{G}$, i.e., $A_{ij}  = \begin{cases}  1, \ &\mbox{if } i \sim j \\ 0, \ &\mbox{otherwise} \end{cases}$. \\

	Define $u_{i,t} = \frac{x_{i,t}}{\alpha_i}$ and $v_{j,t} = \frac{y_{j,t}}{\beta_j}$. With the chosen initial conditions in the theorem, we have that $b^2 \leq u_{i,0}, v_{j,0} \leq 1/b^2$. Using (\ref{eq:VLSclosed1}), the updates
	for $u_{i,t}$ and $v_{j,t}$ can be written as,
	\begin{align} \label{eq:VLSclosed2}
		u_{i,t+1} = \sum_{j: j \sim i} \frac{     y_{j,t}^2   }{\sum_{k: k \sim i} y_{k,t}^2} \frac{1}{v_{j,t}} \quad \mbox{and} \quad
		\frac{1}{v_{j,t}} = \sum_{i: j \sim i} \frac{   \alpha_i x_{i,t} }{\sum_{k: j \sim k} \alpha_i x_{i,t}} u_{i,t}.
	\end{align}
	
	The convex combination update rules in (\ref{eq:VLSclosed2}) imply that all future iterates satisfy $b^2 \leq u_{i,t}, v_{j,t} \leq 1/b^2$ and $b^3 \leq x_{i,t}, y_{j,t} \leq 1/b^3$.
	Combining the two updates in (\ref{eq:VLSclosed2}), we see
	that $u_{i,t+1}$ can be expressed as a convex combination of $u_{i,t}$, i.e., there exists a stochastic matrix $P_t$ such that $u_{t+1} = P_t u_t$, where $u_t = (u_{i,t}, \ i \in [n])$ expressed as a column vector.
	It is apparent that the support of $P_t$ is same as the support of $AA^T$, i.e., $P_t$ is the transition probability matrix of a random walk on $(\mathcal{V}_R, \mathcal{E}_R)$, where $(i_1, i_2) \in \mathcal{E}_R$
	if and only if $i_2$ is a distance two neighbor of $i_1$ in $\mathcal{G}$. Although $P_t$ depends on $t$, we can prove some useful properties satisfied by $P_t$ that hold for all times $t$. In particular observe from (\ref{eq:VLSclosed2}) that since $x_{i,t}$ and $y_{i,t}$ are bounded by $b^3$ and $1/b^3$, then each non-zero entry of $P_t$
is bounded below by $z\triangleq b^6/\Delta$.

\ignore{	
	\begin{lemma} \label{Puniformbound}
There exists  $0 < z < 1$ that depends on $b$ and $\Delta$ only such that
		all non-zero entries of matrix $P_t$ satisfy $P_{t,ij}\ge z$ for all $t$.
	\end{lemma}
}	

Recalling that $d$ stands for the diameter of  $\mathcal{G}$, define the sequence of matrices $\{ Q_k\}_{k=1}^{\infty}$ as
	\begin{align}
		Q_k = \prod_{t= (k-1)d + 1}^{kd} P_t.
 	\end{align}
	Then for any $k$ and $i,j\in [n]$,  $Q_k$ satisfies $Q_{k, ij} \geq z^d = z^{c \log n} \triangleq \frac{1}{n^\alpha}$, where $\alpha = - c \log z > 0$.
	 Let $w_t = u_{dt}$. Then,
	 \begin{align*}
	 	\max_{1\le i\le n} w_{t+1,i} &\leq \max_{1\le i\le n} w_{t,i} (1 - n^{ - \alpha}) + \min_i w_{t,i} n^{-\alpha}, \\
		\min_{1\le i\le n} w_{t+1,i} &\geq \min_{1\le i\le n} w_{t,i} (1 - n^{ - \alpha}) + \max_i w_{t,i} n^{-\alpha}.
	 \end{align*}
	 Combining the above gives
	 \begin{align}
	 	\left(\max_{1\le i\le n} w_{t+1,i}  - \min_{1\le i\le n} w_{t+1,i}\right)  \leq \left(1 - 2 n^{-\alpha}\right)
\left(\max_{1\le i\le n} w_{t,i} - \min_{1\le i\le n} w_{t,i}\right). \label{LR_contraction}
	 \end{align}
	For $t = \gamma n^{\alpha}$ and for large enough $\gamma$, we get by applying (\ref{LR_contraction}) recursively,
	\begin{align}
		\left(\max_{1\le i\le n} w_{t,i}  - \min_{1\le i\le n} w_{t,i}\right) \leq \delta.
	\end{align}
	Substituting the definition of $w_{t}$, we get $(\max_{1\le i\le n} u_{T,i}  - \min_{1\le i\le n} u_{T,i}) \leq \delta$, where $T = \gamma c n^{\alpha} \log n$.
	This means there exists a constant $b \leq B \leq 1/b$ such that $u_{T,i} \in (B - \delta, B+ \delta)$ for all $i$. From (\ref{eq:VLSclosed2}),
we get that
	$\frac{1}{v_{t,j}} \in (B - \delta, B+ \delta)$. Taking $\delta$ sufficiently small, we obtain
	\begin{align}
		u_{i,T} v_{j,T} \in \left( \frac{B - \delta}{B + \delta}, \frac{B + \delta}{B  - \delta}  \right) \subset (1 - b^2\epsilon, 1 + b^2\epsilon),
	\end{align}
	where (\ref{eq:epsilon-delta}) was used in the inclusion step.
	
	Hence,
\begin{align*}
|\alpha_i \beta_j - x_{i,T} y_{j,T}| = |\alpha_i \beta_j ( 1 - u_{i,T}v_{j,T})| \leq |\alpha_i \beta_jb^2\epsilon|\le
\epsilon,
\end{align*}
which completes the proof.

%
%
		
\end{proof}

We now proceed to the Edge Least Square (ELS) algorithm, which is a message passing version of the VLS algorithm. In this algorithm,
the variables, rather than being supported on nodes, namely  $x_{i,t}$ and $y_{j,t}$,
are now supported on edges and will be correspondingly denoted by $x_{i \rightarrow j, t}$ and $y_{j \rightarrow i, t}$.

{\rule{\linewidth}{1pt}}
\vspace{-0.09in}

EDGE LEAST SQUARES (ELS)
\vspace{-0.07in}

{\rule{\linewidth}{1pt}}

\begin{enumerate}
	\item Initialization: $x_{i \rightarrow j,0}, y_{j \rightarrow i,0}$, for all $i,j\in [n]$. Matrix $M$ and graph $\mathcal{G}$.
	\item For $t = 1, \ldots, T$:
		For each $i \in \mathcal{V}_R$ and $j\sim i$ set
		 \begin{align}			x_{i \rightarrow j, t+1} = \arg \min_{x \in \mathbb{R}^r} \sum_{k:{k \sim i},{k \neq j}} |x^Ty_{k \rightarrow i, t} -  M_{ij}|^2.   \label{ELSupdate1} \end{align}
		 For each $j \in \mathcal{V}_C$ and $i\sim j$ set
		\begin{align}		y_{j \rightarrow i, t+1} = \arg \min_{y \in \mathbb{R}^r} \sum_{k: k \sim j, k \neq i} |y^Tx_{k \rightarrow j, t+1} - M_{ij}|^2.	 \label{ELSupdate2} \end{align}
	\item Compute $x_{i,T} = \frac{1}{\Delta(i)} \sum_{j: i \sim j } x_{i \rightarrow j, T}$ and 	$y_{j,T} = \frac{1}{\Delta(j)} \sum_{j: i \sim j } y_{j \rightarrow i, T}$.
	\item Set $X_T=(x_{i,T}, 1\le i\le n), Y_T=(y_{j,T}, 1\le j\le n)$.   Output $\hat{M} = X_T Y_T^T$.			
\end{enumerate}

\rule{\linewidth}{1pt}

Each iteration of the ELS consists of solving $2 |\mathcal{E}|$ least squares problems, so the total computation time per iteration is $O(r^2 \Delta |\mathcal{E}| )$.

For the special case of rank one matrices, it is possible to conduct an analysis on the ELS iterations along the lines of the proof of Theorem \ref{theorem:VLS} for VLS.
Let $\mathcal{H}$ be the dual graph on the directed edges of $\mathcal{G}$. Here $\mathcal{H} = (\mathcal{V}_H, \mathcal{E}_H)$ where $\mathcal{V}_H = \mathcal{V}_{H,R} \cup \mathcal{V}_{H,C} $ and
$\mathcal{V}_{H,R}$ consists of all directed edges $(i, j)$ of $\mathcal{G}$ where $i\in\mathcal{V}_R$ and $j\in\mathcal{V}_C$,
and $\mathcal{V}_{H,S}$ consists of all directed edges $(j, i)$ of $\mathcal{G}$, defined similarly.
Additionally,
 $(i,j) \in \mathcal{V}_{H,R}$ and $(k,l) \in \mathcal{V}_{H,S}$ are neighbors if and only if $j = k$ or $l=i$.
Similarly to (\ref{eq:VLSclosed1}), we can write the corresponding update rules for ELS as follows
\begin{align} \label{eq:ELSclosed1}
		x_{i \rightarrow j, t+1}  = \frac  { \sum_{k:{k \sim i},{k \neq j}}   M_{ik} y_{k \rightarrow i, t} }{\sum_{k:{k \sim i},{k \neq j}} y_{k \rightarrow i}^2} \quad
		\mbox{and} \quad  y_{j \rightarrow i, t+1}  = \frac  { \sum_{k:{j \sim k},{k \neq i}}   M_{kj} x_{k \rightarrow j, t+1} }{\sum_{k:{j \sim k},{k \neq i}} x_{k \rightarrow j}^2}.
\end{align}

Define $u_{i \rightarrow j, t} = \frac{x_{i \rightarrow j, t} }{\alpha_i}$ and $v_{j \rightarrow i, t} = \frac{y_{i \rightarrow j, t} }{\beta_j}$. Then, similarly to  (\ref{eq:VLSclosed2}), we can write the corresponding update rules for ELS as follows.
\begin{align} \label{eq:ELSclosed2}
		u_{i \rightarrow j,t+1} = \sum_{k:{k \sim i},{k \neq j}}  \frac{     y_{k \rightarrow i,t}^2   }{\sum_{l: l \sim i, l \neq j} y_{l \rightarrow i,t}^2} \frac{1}{v_{k \rightarrow i,t}} \quad \mbox{and} \quad
		\frac{1}{v_{j \rightarrow i,t}} = \sum_{k: j \sim k, k \neq i} \frac{   \alpha_i x_{k \rightarrow j,t} }{\sum_{l: j \sim l} \alpha_i x_{l \rightarrow j,t}} u_{k \rightarrow j,t}.
	\end{align}	
Again, as before, letting $u_t = (u_{i \rightarrow j,t}, \ i \sim j)$ we can write $u_{t+1}  = P_t u_t$ for some stochastic matrix $P_t$. The support of $P_t$ is the graph $\mathcal{V}_{H,R}$ where the two vertices are neighbors
if and only if they are distance two neighbors in $\mathcal{H}$. From the above equations, it is apparent that it is possible to prove a result similar to Theorem \ref{theorem:VLS} for ELS.
We state the result below. We omit the proof as it is identical to the proof of Theorem \ref{theorem:VLS}.
\begin{theorem} \label{theorem:ELS}
	Let $M = \alpha \beta^T$ with $\alpha, \beta \in \mathbb{R}^n$ and suppose there exists $0 < b < 1$ such that for all $i,j \in [n]$, we have
	$b \leq \alpha_i, \beta_j \leq 1/b$. Suppose that the graph $\mathcal{H}$ is connected and has diameter $d = c \log n$ for some fixed constant $c$ and maximum degree $\Delta$.
 There exists a constant $\gamma > 0$ which depends on $c,\Delta$ and $b$ only,
 such that for any initialization  $b \leq x_{i \rightarrow j,0}, y_{j \rightarrow i, 0} \leq 1/b, i \in [n]$ and
$\epsilon >0$, there exists an iteration number
	$T = O(n^\gamma \log n)$ such that after $T$ iterates of ELS, we have $\frac{1}{n} \|X_T Y_T^T - M \|_F < \epsilon$.
\end{theorem}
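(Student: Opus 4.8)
The plan is to run the argument of Theorem~\ref{theorem:VLS} essentially verbatim, with the dual graph $\mathcal{H}$ playing the role of $\mathcal{G}$ and the edge-indexed messages $u_{i\rightarrow j,t},v_{j\rightarrow i,t}$ playing the role of the vertex variables $u_{i,t},v_{j,t}$. First I would fix $\epsilon>0$ and choose $\delta>0$ exactly as in (\ref{eq:epsilon-delta}), and record that since the initialization obeys $b\le x_{i\rightarrow j,0},y_{j\rightarrow i,0}\le 1/b$, the change of variables $u_{i\rightarrow j,t}=x_{i\rightarrow j,t}/\alpha_i$, $v_{j\rightarrow i,t}=y_{j\rightarrow i,t}/\beta_j$ starts in $[b^2,1/b^2]$. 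The convex-combination form of the updates in (\ref{eq:ELSclosed2}) preserves these bounds for all $t$, hence also $b^3\le x_{i\rightarrow j,t},y_{j\rightarrow i,t}\le 1/b^3$. Composing the two halves of (\ref{eq:ELSclosed2}) then yields $u_{t+1}=P_tu_t$ with $P_t$ row-stochastic and supported on the distance-two structure of $\mathcal{V}_{H,R}$, exactly as asserted in the text preceding the statement.

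Next I would supply the two quantitative inputs to the contraction. The bounds $b^3\le x_{i\rightarrow j,t}\le 1/b^3$ together with $|\{k:k\sim i,\,k\neq j\}|\le\Delta$ give a uniform (in $t$) lower bound $z=z(b,\Delta)>0$ on every nonzero entry of $P_t$, the ELS analogue of the value $z=b^6/\Delta$ used for VLS. I would then form $Q_k=\prod_{t=(k-1)d+1}^{kd}P_t$ over a window whose length equals the diameter $d=c\log n$ of $\mathcal{H}$, so that each entry of $Q_k$ is bounded below by $z^{d}=n^{-\gamma}$ with $\gamma=-c\log z>0$. With all entries of $Q_k$ at least $n^{-\gamma}$, the Dobrushin-type inequalities underlying (\ref{LR_contraction}) show that one window shrinks the spread $\max_a w_{t,a}-\min_a w_{t,a}$ by the factor $(1-2n^{-\gamma})$; iterating $O(n^\gamma)$ windows drives the spread below $\delta$, which is attained after $T=O(n^\gamma\log n)$ ELS iterations.

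To finish I would argue, as in the VLS proof, that once $\max_{i\rightarrow j}u_{i\rightarrow j,T}-\min_{i\rightarrow j}u_{i\rightarrow j,T}\le\delta$ there is a constant $B\in[b,1/b]$ with every $u_{i\rightarrow j,T}$ and every $1/v_{j\rightarrow i,T}$ lying in $(B-\delta,B+\delta)$, whence each product $u_{i\rightarrow j,T}v_{j\rightarrow i,T}\in(1-b^2\epsilon,1+b^2\epsilon)$. The one genuinely new bookkeeping step relative to VLS is the averaging in Step~3 of the ELS output: since $x_{i,T}/\alpha_i=\Delta(i)^{-1}\sum_{j\sim i}u_{i\rightarrow j,T}$ is a convex combination of numbers in $(B-\delta,B+\delta)$, and likewise for $y_{j,T}/\beta_j$, the inclusion $u_{i,T}v_{j,T}\in(1-b^2\epsilon,1+b^2\epsilon)$ survives the averaging, and the bound $\tfrac1n\|X_TY_T^T-M\|_F<\epsilon$ follows from $|\alpha_i\beta_j-x_{i,T}y_{j,T}|\le\epsilon$ exactly as before.

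The step I expect to require the most care — and the only place where the argument is \emph{not} literally identical to Theorem~\ref{theorem:VLS} — is the claim that \emph{every} entry of $Q_k$ is strictly positive. In VLS the distance-two walk retained the backtracking summand (the term $i'=i$ in (\ref{eq:VLSclosed2})), so $P_t$ had a positive diagonal and was automatically aperiodic, making $Q_{k,ij}\ge z^d$ immediate from connectedness of $\mathcal{G}$. The ELS updates in (\ref{eq:ELSclosed2}) are non-backtracking ($k\neq j$ and $l\neq i$), so the induced walk on $\mathcal{V}_{H,R}$ need not have self-loops and one must separately verify that it is irreducible \emph{and} aperiodic over a window of length $d$, so that the length-$d$ transition reaches every pair without a parity obstruction. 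This is where connectedness of $\mathcal{H}$, together with the assumption that every vertex of $\mathcal{G}$ has degree at least two (already required for the excluded-neighbor denominators to be nonempty and for $z>0$), must be invoked; in the presence of a periodic non-backtracking structure the fixed window length would have to be adjusted or an additional girth/odd-closed-walk condition imposed. Once positivity of $Q_k$ is secured, the remainder is word-for-word the VLS argument.
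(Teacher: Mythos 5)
Your proposal follows exactly the paper's route: the paper gives no separate argument for Theorem~\ref{theorem:ELS}, stating only that the proof is identical to that of Theorem~\ref{theorem:VLS} after replacing $\mathcal{G}$ by the dual graph $\mathcal{H}$ and the vertex variables by the edge messages, which is precisely the substitution you carry out. The subtlety you flag at the end --- that the ELS walk is non-backtracking in both coordinates, so $P_t$ has no self-loops and the entrywise positivity of $Q_k$ over a window of length $d$ does not follow from connectedness alone --- is a genuine point that the paper's one-line ``identical proof'' remark does not address, so your write-up is, if anything, more careful than the original.
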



\section{Experiments} \label{sec:sim}
In this section, we provide simulation results for the VLS and ELS algorithms with particular focus on
\begin{itemize}
	\item[(a)] The  convergence rate of VLS vs ELS
	\item[(b)] Success of  VLS and ELS for rank $r\ge 1$.
\end{itemize}

In view of Theorem \ref{theorem:VLS}, we generate $\alpha, \beta \in \mathbb{R}^n$ independently and uniformly
at random from $U[0.01, 0.99]$. We then compare the decay in root mean square error (RMS) defined below in (\ref{rms}) with number of iterations.
To do so, we first generate a uniformly random $3$-regular bipartite graph $\mathcal{G} = (\mathcal{V}, \mathcal{E})$ on $2n$ vertices with $n$ vertices on each side and keep it fixed for the experiment. We then run VLS and ELS on
$M_{\mathcal{E}}$.
 Random regular graphs are known to be connected with high probability, and we did not find significant variation in results by changing the graph. Since ELS requires about a $\Delta$ factor more computation per iteration,
 we plot the decay of RMS vs normalized iterations index which is defined as  $\frac{\mbox{iteration number}}{\mbox{Total iterations}}$ for VLS and $\Delta \frac{\mbox{iteration number}}{\mbox{Total iterations}}$ for ELS.

The root mean square (RMS)  error after $T$ iterations of ether VLS or ELS is defined as
\begin{align} \label{rms}
	\frac{1}{n} \| M - X_T Y_T^T\|_F.
\end{align}

\begin{figure}[h]
\begin{center}
	\includegraphics[scale=0.8]{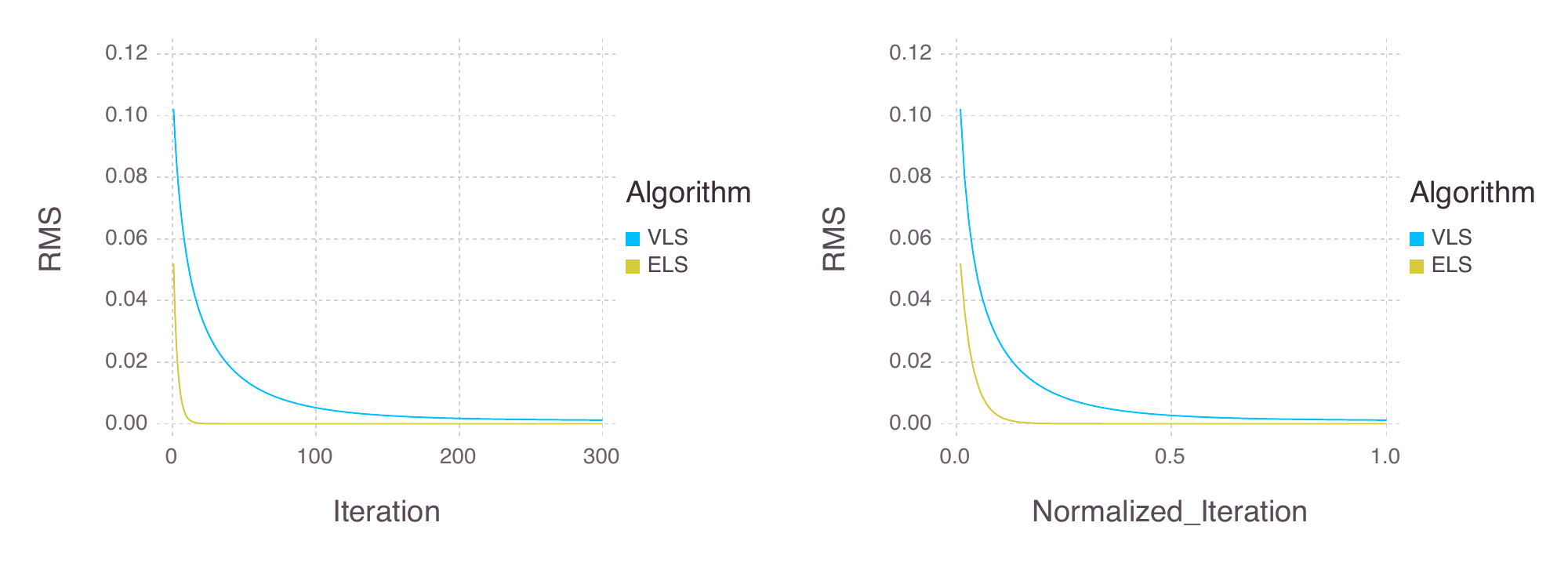}
\end{center}
\caption{RMS vs number of iteration (normalized and un-normalized) for VLS and ELS \label{fig:VLSvsELS}}
\end{figure}

The comparison in Figure~\ref{fig:VLSvsELS} (computed for $n=100$) demonstrates that ELS converges faster than VLS. We find that this effect is even more pronounced when $r > 1$.

To compare VLS and ELS for rank $r$ matrices, we generate
 each entry of $\alpha\in \R^{n\times r}$ and $\beta\in \R^{n\times r}$
 uniformly from the interval $[-1,1]$. We generate a random $r+1$ regular bipartite graph $\mathcal{G} = (\mathcal{V}, \mathcal{E})$
 to ensure that the minimum degree requirement is met. Then we generate another edge set $\mathcal{E}_1$, where each edge exists independently with probability $c/n$.
 Finally we  set
  $\mathcal{E} = \mathcal{E} \cup \mathcal{E}_1$. We plot the empirical fraction of failure obtained from $200$ iterations, where a failure is assumed to occur when the algorithm (VLS or ELS) fails to achieve an RMS less than
  $10^{-3}$ within $500$ iterations. In fact a divergence characterized by an explosion in the RMS value is usually observed at a much earlier iteration whenever there the algorithm fails.
  Figure~\ref{fig:Pfvsc} shows the results for ELS on the left when $r = 2$ and $r=3$ respectively. This provides evidence for the success of ELS even with a cold start.
  On the right of Figure~\ref{fig:Pfvsc} we plot the same for VLS with cold start for $r=2$, showing that it does not always succeed.


%
%

\begin{figure}[h]
\begin{center}
	\includegraphics[scale=0.8]{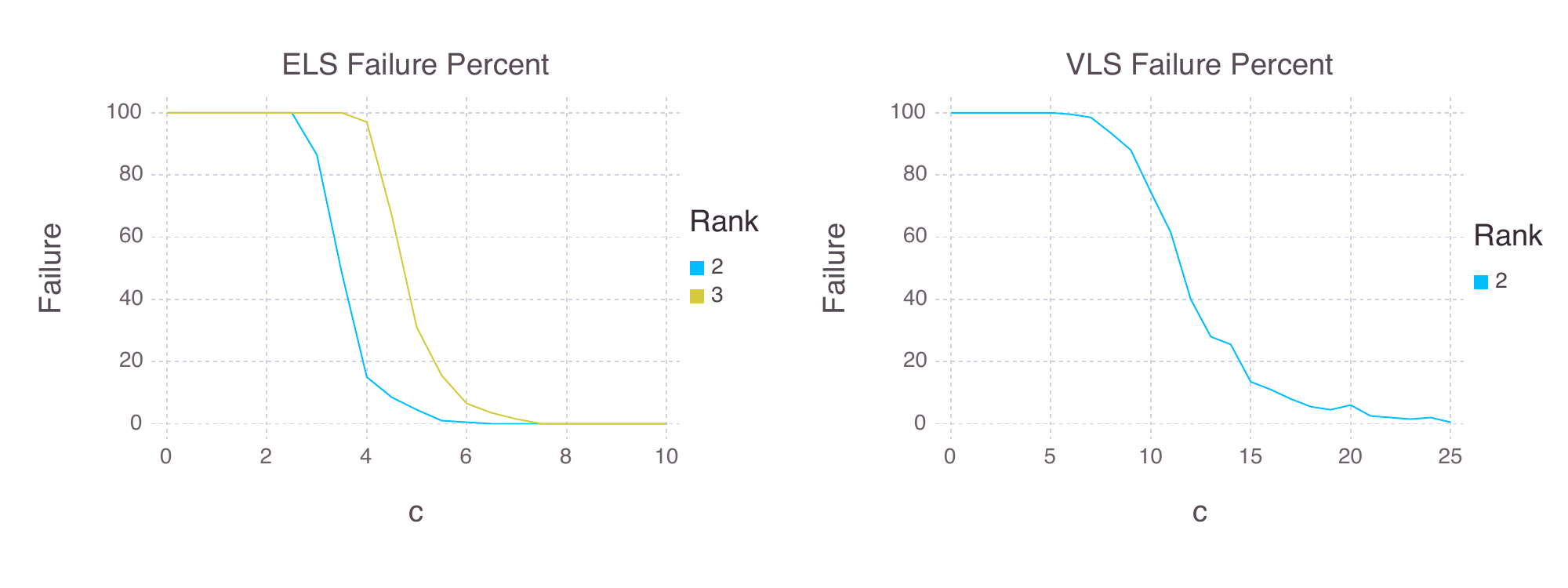}
\end{center}
\caption{{\bf Left:} ELS failure fraction vs $c$ for $r=2$ with planted $3$-regular graph, and $r=3$ with planted $4$-regular graph ($n=100$)  {\bf Right:} VLS failure fraction vs $c$ for $r=2$ with planted $3$-regular graph}
 \label{fig:Pfvsc}
\end{figure}

The figures suggest the emergence of a phase transition. For each algorithm and rank value $r$ there seems to be a critical degree $c^*_{\mathcal{A},r}$ such
that the algorithm $\mathcal{A}$ succeeds with high probability when $c>c^*_{\mathcal{A},r}$ and fails with high probability otherwise.
Furthermore, it appears again based on the simulation results, that the $c^*_{\mathcal{A},r}$ for ELS is smaller than the one for VLS.
In particular, for VLS the threshold appears to be around $3.5$ for ELS when $r=2$, whereas for ELS it appears to be around $12$, for the same value of $r$.
In other words, ELS appears to have a lower sample complexity required for it to succeed. Whether these observations can be theoretically established
is left as an intriguing open problem.

\bibliographystyle{unsrt}
\bibliography{Reference_LR}

\end{document}